\newtheorem{prop}{Proposition}
\theoremstyle{definition}
\newtheorem{defn}[prop]{Definition}
\newtheorem{example}[prop]{Example}
\DeclareMathOperator{\dep}{D}
\DeclareMathOperator{\pa}{pa}
\newcommand\indep{\protect\mathpalette{\protect\independenT}{\perp}}
\def\independenT#1#2{\mathrel{\rlap{$#1#2$}\mkern4mu{#1#2}}}
\newsavebox{\fminipagebox}
\NewDocumentEnvironment{fminipage}{m O{\fboxsep}}
 {\par\kern#2\noindent\begin{lrbox}{\fminipagebox}
  \begin{minipage}{#1}\ignorespaces}
 {\end{minipage}\end{lrbox}%
  \makebox[#1]{%
    \kern\dimexpr-\fboxsep-\fboxrule\relax
    \fbox{\usebox{\fminipagebox}}%
    \kern\dimexpr-\fboxsep-\fboxrule\relax
  }\par\kern#2
 }
\renewcommand{\cite}{\citep}
\title{Measurement Dependence Inducing Latent Causal Models}
\author{ 
  Alex Markham\footnotemark% \thanks{Footnote for author to give an
% alternate address.}
and  Moritz Grosse-Wentrup\footnotemark[1]\footnotemark[2]\footnotemark[3] \\
\textsuperscript{1}\footnotetext{test12}Research Group Neuroinformatics, Faculty of Computer Science, University of Vienna\\
\textsuperscript{2}Research Platform Data Science @ Uni Vienna  \\
\textsuperscript{3}Vienna Cognitive Science Hub \\
}
\begin{document}
\maketitle

\begin{abstract}
  We consider the task of causal structure learning over measurement dependence inducing latent (MeDIL) causal models. % rather latent variable identification?
  We show that this task can be framed in terms of the graph theoretic problem of finding edge clique covers, resulting in an algorithm for returning minimal MeDIL causal models (minMCMs).
  This algorithm is non-parametric, requiring no assumptions about linearity or Gaussianity. 
  Furthermore, despite rather weak assumptions about the class of MeDIL causal models, we show that \emph{minimality} in minMCMs implies some rather specific and interesting properties.
  By establishing MeDIL causal models as a semantics for edge clique covers, we also provide a starting point for future work further connecting causal structure learning to developments in graph theory and network science. 
\end{abstract}

\section{INTRODUCTION}
\let\oldfootnotesize\footnotesize
\renewcommand*{\footnotesize}{\oldfootnotesize\fontsize{8pt}{8pt}\selectfont}

Despite the many theoretical and practical difficulties, establishing and understanding causal relationships remains one of the fundamental goals of scientific research.
Consequently, many different approaches have been developed, with applications spanning a diverse range of fields, e.g., from epidemiology to psychometrics to neuroimaging \cite{Parascandola_2001,Hoover_2006,Seth_2015}. %,Pearl:2000,Spirtes:2000
Some of the most well-known approaches include Granger causality \cite{Granger_1969} for time-series data, the Rubin causal model and potential outcomes framework \cite{Holland:1986} for randomized controlled trials, and functional causal models and the representation of their causal structure as directed acyclic graphs \cite{Pearl:2000,Spirtes:2000}. % \cite{Pearl:1995,Pearl:2000,Spirtes:2000,Spirtes:2009}. Rubin:1974,Rubin:2005,  ,Granger_2004
The last of these, the directed acyclic graph (DAG), provides the context for our approach to causal structure learning.

Roughly speaking, causal structure learning (CSL) typically focuses on identifying which variables are directly causally related and how these \textit{direct causal relations form a structure} over which indirect causal relations exist.
One way of characterizing CSL algorithms is according to which of the three following assumptions they rely on:
(i) the \textit{causal Markov assumption}, which says the random variables are (conditionally) independent (denoted by \(\indep\)) if the corresponding vertices in the DAG are d-separated (denoted by \(\perp\));
(ii) the \textit{causal faithfulness assumption}, which says the vertices in the DAG are d-separated if the corresponding random variables are (conditionally) independent;
and (iii) the \textit{causal sufficiency} of the set of variables, i.e. that there are no unobserved or latent common causes.
The basic approach to CSL---namely the original constraint-based IC and PC algorithms \cite{verma1990equivalence,Spirtes_1991}---rely on all three, while many of the algorithms developed in the 30 years since (as we will see in Section \ref{sec:related-work}) relax these assumptions.

Considering applications of CSL to, for example, psychometrics and neuroimaging, the assumption of causal sufficiency seems implausible.
For a data set consisting solely of answers to a depression diagnostic questionnaire or of voxel intensities in calcium imaging recordings (with random variables corresponding respectively to the questions or voxels), we think it is relatively uncontroversial to claim that not only are the random variables not causally sufficient, but indeed \textit{every} dependence relation among them is induced by unobserved latent variables (respectively either cognitive processes related to, e.g., depression, or calcium signaling in cellular tissue, plus other confounders).
In fields and applications such as these---where interventions are often difficult or unfeasible, and where the goal is to reason about underlying causes based on their measurable effects---a more tailored causal modeling framework may prove insightful.
Thus, the main difference between the traditional approach outlined above and the one we present in this paper is that we assume a strong causal \textit{in}sufficiency of the random variables being modeled and therefore are able to represent a different (but not entirely disjoint) class of causal structures than is possible with DAGs in the traditional approach.

The rest of the paper is organized as follows:
We begin by reviewing related work, emphasizing points of departure.
In Section \ref{sec:obs_mcm} we define measurement dependence inducing latent (MeDIL) causal models to be the class of latent measurement models in which measurement variables can only be effects (and not causes---contrary to the definition of measurement models explored by others), making no further assumptions about linearity or parametrizations of the distributions.
We then introduce the notions of observational consistency and minimality, allowing us to, for a given (estimated) distribution of measurement variables, construct a minimal MeDIL causal model (minMCM).
Then, in Section \ref{sec:ecc}, by framing minMCMs as edge clique covers (ECCs) of the undirected dependency graph over measurement variables, we note how two notions of minimality emerge.
Subsequently, despite our nonrestrictive assumptions and notion of minimality in minMCMs, we are able to prove
(i) that a minMCM lower bounds the number of latent variables or the number of functional causal relations (depending on which notion of minimality is used),
(ii) that the latent variables of the minMCM are all pairwise independent, and
(iii) that (somewhat surprisingly) the minMCM can have more latent causes than measured variables.
In Section \ref{sec:alg} we describe an algorithm for learning minMCMs from only \textit{unconditional} (in)dependencies.
Finally, we demonstrate our approach with an application to a psychometric data set in Section \ref{sec:application}, before concluding with a discussion of promising directions for future work.

\subsection{RELATED WORK}
\label{sec:related-work}

Elaborating on the basic approach mentioned above, CSL without latents amounts to finding an \textit{essential graph} \cite{andersson1997characterization}, a mixed graph with directed and undirected edges, which represents the Markov equivalence containing the true DAG.
The essential graph is typically found by using either a score- or constraint-based approach.
Score-based methods find an essential graph by directly optimizing a score of how well it fits the data samples \cite{chickering2002optimal}.% ,ramsey2015scaling,zheng2018dags,lachapelle2019gradient}.
Constraint-based methods take a set of conditional independence relations as input (which must be estimated or acquired somehow before applying the algorithm), and these relations constitute a set of constraints on the possible d-separations, which the output essential graph satisfies \cite{verma1990equivalence,Spirtes_1991}.
Our approach in this paper is more closely related to constraint-based methods, especially their extensions to latent variable models.

Extensions of CSL to causal models including latent variables (i.e., relaxing the causal sufficiency assumption), such as the FCI algorithm and its variants \cite{spirtes1999algorithm}, correspondingly extend the search space from essential graphs to partial ancestral graphs (PAGs), which have an additional three edge types (so five total), allowing them to represent the extended Markov equivalence class containing dependencies induced by latent variables. % ,ogarrio2016hybrid

In these terms, our latent CSL algorithm \textit{is not} searching for a PAG.
As we explain in sections \ref{sec:obs_mcm} and \ref{sec:ecc}, by making use of the strong causal insufficiency in this application space, we can directly represent the conditional independence constraints that form the input for our algorithm as an undirected dependence graph (UDG).
This UDG is essentially a PAG with only bidirected edges.
Or, put another way, it is a modified Markov random field \cite{Kindermann_1980} where the conditional independence relations are determined from the undirected edges by using strong causal insufficiency (see Proposition \ref{prop:det}) instead of the Markov property, thereby allowing the UDG to represent latent induced dependence (which Markov random fields are usually incapable of representing).

With the conditional independence constraints input in the form of a UDG over measurement variables, our algorithm essentially adds the latent causes and directed edges necessary to construct the minimally causally sufficient DAG containing latent and measurement variables.
Thus, instead of doing CSL in the presence of latent variables as is the case with FCI and similar algorithms, we \textit{use CSL to reason about latent variables}.

Our approach is more related in this respect to other work on measurement models \cite{silva2005generalized,Silva:2006,Kummerfeld:2014,Kummerfeld:2016}.
However, these other approaches utilize properties of the covariance matrix of the measurement variables, such as vanishing tetrad constraints, while we utilize graph theoretic properties of the UDG representation of conditional independencies.
This results in connections between our approach and causal feature learning \cite{Chalupka_2016} and causal consistency and abstraction \cite{Rubenstein:2017,beckers2019abstracting}, which will be discussed more with respect to future work in Section \ref{sec:sign-extens-furth}.
Another closely related approach is factor analysis, especially when framed in terms of using the topology of a Bayesian network of observed variables to reason about hidden factors \cite{martin1994discrete}, with the main difference being our goal of a minimally causally sufficient DAG as opposed to a statistically convenient (but not necessarily as causally relevant) factor model.

Overall, our approach has several points of overlap in terms of motivations and formal methods in existing CSL, measurement model, and factor analysis approaches.
However, we address the problem from a different perspective, utilizing the causal insufficiency property of our application space and graph theoretic edge clique cover methods to produce a novel algorithm.

% Need to discuss and cite \cite{martin1994discrete} and 

\section{MINIMAL MEDIL CAUSAL MODELS}
\label{sec:obs_mcm}

We begin with a formal definition of \textit{measurement dependence inducing latent (MeDIL) causal models}, before discussing the notion of observational consistency and its implications about minimality in such models.  

We use functional causal models (FCMs) to describe causal relations in complex systems.

\begin{defn}[Functional Causal Model]
  A \emph{functional causal model} is a triple \(\mathcal{M} = \langle \mathbf{V}, \mathbf{F}, \boldsymbol{\epsilon}\rangle\), where
  \begin{itemize}
  \item \(\mathbf{V}\) is the set of (endogenous) random variables, 
  \item \(\mathbf{F}\) is a set of functions defining each endogenous variable as a function of its direct causes (i.e., parents or \(\pa()\)) and its corresponding exogenous random variable, so that for each \(V_i \in \mathbf{V}\), we have \(V_i := f_i(\pa(V_i), \epsilon_i)\).
    Furthermore, \(\mathbf{F}\) is constrained such that no \(V_i\) is a direct cause of itself or any of its causes, removing the possibility of causal cycles.
  \item \(\boldsymbol{\epsilon}\) defines a joint probability distribution over the exogenous (or noise) variables, with a corresponding \(\epsilon_i \in \boldsymbol{\epsilon}\) for each \(V_i \in \mathbf{V}\), and with \(\epsilon_i\) being independent with \(\epsilon_j\) for each \(\epsilon_i, \epsilon_j \in \boldsymbol{\epsilon}\)
  \end{itemize}\qed    
\end{defn}

In particular, we are interested in latent CSL over measurement variables, so it is advantageous to move from the general FCM definition to a specifically structural/graphical definition that conceptually differentiates the set of endogenous variables into causally effective latent variables and their observed measurements, leading to the idea of MeDIL causal models:  

\begin{defn}[Measurement Dependence Inducing Latent Causal Model (MCM)]
  \label{defn:mm}
  A graphical MCM is a DAG, given by the triple \(\mathcal{G} = \langle \mathbf{L}, \mathbf{M}, \mathbf{E} \rangle\).
  \(\mathbf{L}\) and \(\mathbf{M}\) are disjoint sets of vertices, while \(\mathbf{E}\) is a set of directed edges between these vertices, subject to the following constraints:
  \begin{enumerate}
  \item all vertices in \(\mathbf{M}\) have in-degree of at least 1 and out-degree of 0
  \item all vertices in \(\mathbf{L}\) have out-degree of at least 1
  \item \(\mathbf{E}\) contains no cycles
  \end{enumerate}\qed
\end{defn}

There are no further constraints as to the variety of distributions and functional causal relations that MCMs can represent, i.e., they are non-parametric and their arrows can represent arbitrary functional relations between variables.
The formal constraints 1.~and 2.~in Definition \ref{defn:mm} are to ensure that MCMs are applicable to settings in which we can explicitly separate into disjoint sets the measured effect variables \(\mathbf{M}\) whose probabilistic dependencies must therefore be mediated by latent causes \(\mathbf{L}\).

However, the explicit separation of cause and effect and the corresponding latent structure in MCMs introduces its own difficulties for inference.
Namely, many latent models are consistent with a given probability distribution over observed effects, making the task of inferring a single latent model ill-posed.
% This is usually addressed by introducing strong assumptions, e.g., linearity (see Section \ref{sec:related-work}).
% We, however, want to study what we can learn from CSL over MCMs without such assumptions.
In order to help explain this consistency of different latent models and illustrate our strategy for restricting the problem so that inference is well-posed, consider the following definition and example.

\begin{defn}[Observational Consistency]
  \label{defn:oc}
  A MCM is \textit{observationally consistent} with a probability distribution over measurement variables if it is capable of inducing the pairwise dependencies (which can estimated from samples) of that distribution.
  % Furthermore, two MCMs are observationally consistent with each other if there exists a distribution of measurements that they are both capable of inducing.
  This can be seen as a weakening of the notion of observational equivalence corresponding to our extension from DAGs containing only observed variables to the notion of MCMs.%
  \footnote{\textit{observational} or \textit{Markov equivalence} \cite[pp. 16--20]{Pearl:2000} means two DAGs have the same skeletons and colliders, while observational consistency means that two MCMs have the same undirected dependency graphs over measurement variables (e.g., Figure \ref{fig:ex_oc})}%
\end{defn}

\begin{example}[Observational Consistency]
  \label{ex:obs}
  Suppose we have data consisting of peoples' answers to a questionnaire with four questions designed to measure depression and stress. 
  We assume that the answer to one question cannot cause the answer to another and therefore that the observed answers as well as any observed association between answers are the result of latent causes, such as depression or stress.
  Define random variables \(\mathbf{M} = \{M_1, M_2, M_3, M_4\}\) corresponding to answers to the four questions, and let them have only the following two pairwise independencies:
  \[M_1 \indep M_4 \quad\text{and}\quad M_2 \indep M_4\]
  
  The pairwise dependency structure between variables in \(\mathbf{M}\) is shown in Figure \ref{fig:ex_oc:a}, and three observationally consistent MCMs are shown in \ref{fig:ex_oc:b}, \ref{fig:ex_oc:c}, \ref{fig:ex_oc:d}.
  As this example demonstrates, multiple latent models can give rise to the same set of observed dependencies.

\qed
\end{example}
  
\tikzstyle{VertexStyle} = [shape = circle, minimum width = 2ex, draw] 
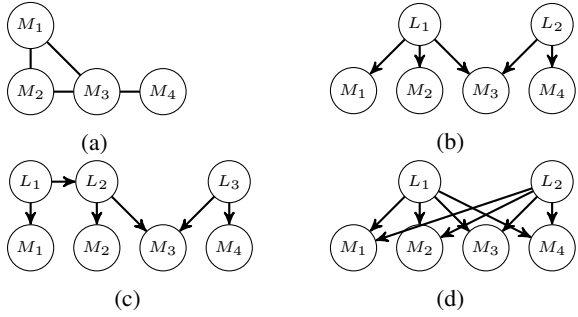
\begin{figure}[h!]
  \centering{\tiny
    \begin{minipage}{.26\textwidth}
      \subcaptionbox{\label{fig:ex_oc:a}}{
        \begin{tikzpicture}[scale=.4]
          \centering
          \SetGraphUnit{2.2}
          \begin{scope}[execute at begin node=$, execute at end node=$]
            \Vertex{M_1} \SO(M_1){M_2} \EA(M_2){M_3} \EA(M_3){M_4}
            \Edges(M_3, M_1, M_2, M_3, M_4)
          \end{scope}
        \end{tikzpicture}}
    \end{minipage}%
    \tikzstyle{EdgeStyle} = [->,>=stealth']%
    \begin{minipage}{.26\textwidth}
      \subcaptionbox{\label{fig:ex_oc:b}}{
        \begin{tikzpicture}[scale=.4]
          \centering
          \SetGraphUnit{2.2}
          \begin{scope}[execute at begin node=$, execute at end node=$]
            \Vertex{M_1} \EA(M_1){M_2} \EA(M_2){M_3} \EA(M_3){M_4}
            \NO(M_2){L_1} \NO(M_4){L_2}
            \Edges(L_1, M_1) \Edges(L_1, M_2) \Edges(L_1, M_3)
            \Edges[style=-](L_2, M_3) \Edges(L_2, M_4)  
          \end{scope}
        \end{tikzpicture}}
    \end{minipage}
    \begin{minipage}{.26\textwidth}
      \subcaptionbox{\label{fig:ex_oc:c}}{
        \begin{tikzpicture}[scale=.4]
          \centering
          \SetGraphUnit{2.2}
          \begin{scope}[execute at begin node=$, execute at end node=$]
            \Vertex{M_1} \EA(M_1){M_2} \EA(M_2){M_3} \EA(M_3){M_4}
            \NO(M_1){L_1} \NO(M_2){L_2} \NO(M_4){L_3}
            \Edges(L_1, M_1) \Edges(L_1, L_2)
            \Edges(L_2, M_2) \Edges(L_2, M_3)
            \Edges(L_3, M_3) \Edges(L_3, M_4)
          \end{scope}
        \end{tikzpicture}}
      \end{minipage}%
      \begin{minipage}{.26\textwidth}
        \subcaptionbox{\label{fig:ex_oc:d}}{
          \begin{tikzpicture}[scale=.4]
            \tikzstyle{EdgeStyle} = [->,>=stealth']%
            \centering
            \SetGraphUnit{2.2}
            \begin{scope}[execute at begin node=$, execute at end node=$]
              \Vertex{M_1} \EA(M_1){M_2} \EA(M_2){M_3} \EA(M_3){M_4}
              \NO(M_2){L_1} \NO(M_4){L_2}
              \Edges(L_1, M_1) \Edges(L_1, M_2) \Edges(L_1, M_3) \Edges(L_1, M_4)
              \Edges(L_2, M_1) \Edges(L_2, M_2) \Edges(L_2, M_3) \Edges(L_2, M_4)
            \end{scope}
          \end{tikzpicture}}
        \end{minipage}}
      \caption{(a) undirected dependency graph over \(\mathbf{M}\)---notice two missing edges corresponding to independencies; (b) minimal MCM over \(\mathbf{M}\); (c) non-minimal MCM observationally consistent with \(\mathbf{M}\); (d) MCM corresponding to ICA or FA}
  \label{fig:ex_oc}
\end{figure}

% One way of addressing this problem is by assuming linear dependencies and then selecting an appropriate subset of observed variables whose covariance matrices can be used to identify a unique number and structure of latent variables \cite{silva:2002,Silva:2006}.
% We take a different approach, by making no such linearity assumptions and instead
We address this problem by employing Ockham's razor to pick a \textit{minimal MCM (minMCM)} (e.g., Figure \ref{fig:ex_oc:b}).
  
\begin{defn}[Minimal MeDIL causal model (minMCM)]
  \label{defn:mlm}
  A \textit{minMCM} for a set of measurement variables \(\mathbf{M}\) is any least expressive (i.e., minimal) MCM that is observationally consistent with \(\mathbf{M}\).
  As \citet{Pearl:1995} note, a latent causal model's expressive power can be measured by the (in)dependencies it induces over the measured variables, with more dependencies corresponding to more expressive power. 
  In our case, criteria can be given for minimality in modified terms of the causal faithfulness and causal Markov assumptions:
  \begin{enumerate}
  \item in addition to being observationally consistent with its set of measurements, a minMCM must graphically induce the measurements without violating faithfulness; the notion of faithfulness used here is concerned with conditional independencies only over measurements and not all variables in the MCM, so we call it \textit{measurement-faithfulness}; note that Figure \ref{fig:ex_oc:b} is faithful to the conditional independencies in Example \ref{ex:obs} while Figure \ref{fig:ex_oc:d} is not---the MCM in Figure \ref{fig:ex_oc:b} is minimal while that in \ref{fig:ex_oc:d} is not
  \item considering arbitrary subsets of the latents, \(\mathbf{Z} \subseteq \mathbf{L}\), there are as few d-separations of the form \(M_i \not\perp M_j \mid \mathbf{Z}\) as (faithfully) possible, i.e., such d-separations only exist in an minMCM if implied by the (in)dependencies and causal insufficiency of the distribution only over measurement variables; we call this \textit{measurement-Markov} since it says the only d-separations in the minMCM are those implied by measurement-faithfulness\footnote{just as is the case with the usual causal faithfulness and Markov conditions}; note that Figure \ref{fig:ex_oc:c} does not satisfy this\qed
    % \footnote{Note that minimality here mostly comes from the measurement-Markov criteria, and so non-minimal MCMs can be seen as relaxations of this criteria.}
  \end{enumerate}
\end{defn}

% Thus, instead of using the (linear) covariance matrix itself as in \cite{Silva:2006}, finding a minMCM only requires independencies, which can be estimated from nonlinear measures of dependence.
% Estimating a set of independencies from the data before attempting CSL is also necessary for other methods, however our method needs only to consider \textit{unconditional} independencies (unlike PC and IC, which require conditional independencies \cite{Spirtes_1991,verma1990equivalence}) between measurement variables, greatly reducing the number of independencies that must be estimated.
Learning a minMCM for a data set only requires considering the \textit{unconditional} independence relations among its variables, unlike the other methods mentioned in Section \ref{sec:related-work}.
% The sufficiency of only considering unconditional independencies for our method follows 
This follows from Proposition \ref{prop:det}.

\begin{figure*}[h!]
    \small\centering
    \begin{minipage}{\textwidth}
\fbox{  \subcaptionbox{\label{fig:diff:b}}{
    \begin{minipage}{.964\textwidth}
      \centering
  \begin{tikzpicture}[scale=.4]
    \SetGraphUnit{4}
    \begin{scope}[execute at begin node=$, execute at end node=$]\tiny
      \Vertex{M_1} \EA(M_1){M_2} \EA(M_2){M_3} \EA(M_3){M_4} \EA(M_4){M_5} \EA(M_5){M_6} \EA(M_6){M_7} \EA(M_7){M_8}
      \NO(M_1){C_1} \NO(M_4){C_3} \NO(M_5){C_4} \NO(M_6){C_5}
      
      \tikzstyle{VertexStyle} = [circle, dashed, draw=red, thick]
      \NO(M_8){C_7} \NO(M_2){C_2}

      \tikzstyle{VertexStyle} = [circle, dotted, draw=blue, thick]
      \NO(M_7){C_6}       
      
      \Edges(C_1, M_1)  \Edges(C_1, M_5) % \Edges(C_1, M_2) \Edges(C_1, M_3)
      % \Edges(C_2, M_1) \Edges(C_2, M_2) \Edges(C_2, M_3) \Edges(C_2, M_7)
      \Edges(C_3, M_1) \Edges(C_3, M_4) % \Edges(C_3, M_7)
      \Edges(C_4, M_2) \Edges(C_4, M_3) \Edges(C_4, M_5) \Edges(C_4, M_6) \Edges(C_4, M_8)
      \Edges(C_5, M_4) \Edges(C_5, M_6) % \Edges(C_5, M_8)
      % \Edges(C_6, M_2) \Edges(C_6, M_3) \Edges(C_6, M_7) \Edges(C_6, M_8)
      % \Edges(C_7, M_4) \Edges(C_7, M_7) \Edges(C_7, M_8)

      \foreach \from/\to in {C_2/M_1,C_2/M_2,C_2/M_3,C_2/M_7,C_7/M_4,C_7/M_7,C_7/M_8} \draw[dashed, color=red, line width=1pt, ->, >=stealth'] (\from) -> (\to);
      \foreach \from/\to in {C_1/M_2,C_1/M_3,C_3/M_7,C_5/M_8,C_6/M_2,C_6/M_3,C_6/M_7,C_6/M_8} \draw[dotted, color=blue, line width=1pt, ->, >=stealth'] (\from) -> (\to);
    \end{scope}
  \end{tikzpicture}
\end{minipage}
}}

\begin{minipage}{\linewidth}
\fbox{
\subcaptionbox{\label{fig:diff:a}}{
    \begin{minipage}[t][12.75em]{.3\linewidth}% {.925\textwidth}
    \centering
  \begin{tikzpicture}
    \SetGraphUnit{1.5}
    \begin{scope}[execute at begin node=$, execute at end node=$]\tiny
      \Vertices{circle}{M_1, M_2, M_3, M_4, M_5, M_6, M_7, M_8}
      \Edges(M_1, M_2, M_3, M_1, M_5, M_1, M_7, M_3, M_7, M_2, M_7, M_1, M_4, M_6, M_8, M_4, M_7, M_8)
      \Edges(M_5, M_6, M_8, M_5, M_3, M_5, M_2, M_6, M_3, M_2, M_8, M_3, M_7, M_2)
    \end{scope}
  \end{tikzpicture}
\end{minipage}}}%
  \tikzstyle{EdgeStyle} = [->,>=stealth']%
\begin{minipage}[b]{.64\linewidth}\centering
\fbox{ \subcaptionbox{\label{fig:diff:c}}{
    \begin{minipage}{\linewidth}\centering
  \begin{tikzpicture}[scale=.25]
    \SetGraphUnit{4}
    \begin{scope}[execute at begin node=$, execute at end node=$]\tiny
      \Vertex{M_1} \EA(M_1){M_2} \EA(M_2){M_3} \EA(M_3){M_4} \EA(M_4){M_5} \EA(M_5){M_6} \EA(M_6){M_7} \EA(M_7){M_8}
      \NO(M_1){C_1} \NO(M_4){C_3} \NO(M_5){C_4} \NO(M_6){C_5} \NO(M_7){C_6}

      \Edges(C_1, M_1) \Edges(C_1, M_2) \Edges(C_1, M_3) \Edges(C_1, M_5)
      \Edges(C_3, M_1) \Edges(C_3, M_4) \Edges(C_3, M_7)
      \Edges(C_4, M_2) \Edges(C_4, M_3) \Edges(C_4, M_5) \Edges(C_4, M_6) \Edges(C_4, M_8)
      \Edges(C_5, M_4) \Edges(C_5, M_6) \Edges(C_5, M_8)
      \Edges(C_6, M_2) \Edges(C_6, M_3) \Edges(C_6, M_7) \Edges(C_6, M_8)
    \end{scope}
  \end{tikzpicture}
\end{minipage}
}}

\fbox{  \subcaptionbox{\label{fig:diff:d}}{
    \begin{minipage}{\linewidth}\centering
  \begin{tikzpicture}[scale=.25]
    \SetGraphUnit{4}
    \begin{scope}[execute at begin node=$, execute at end node=$]\tiny
      \Vertex{M_1} \EA(M_1){M_2} \EA(M_2){M_3} \EA(M_3){M_4} \EA(M_4){M_5} \EA(M_5){M_6} \EA(M_6){M_7} \EA(M_7){M_8}
      \NO(M_1){C_1} \NO(M_2){C_2} \NO(M_4){C_3} \NO(M_5){C_4} \NO(M_6){C_5} \NO(M_8){C_7} 

      \Edges(C_1, M_1) \Edges(C_1, M_5)
      \Edges(C_2, M_1) \Edges(C_2, M_2) \Edges(C_2, M_3) \Edges(C_2, M_7)
      \Edges(C_3, M_1) \Edges(C_3, M_4)
      \Edges(C_4, M_2) \Edges(C_4, M_3) \Edges(C_4, M_5) \Edges(C_4, M_6) \Edges(C_4, M_8)
      \Edges(C_5, M_4) \Edges(C_5, M_6)
      \Edges(C_7, M_4) \Edges(C_7, M_7) \Edges(C_7, M_8)
    \end{scope}
  \end{tikzpicture}
\end{minipage}
}}
\end{minipage}
\end{minipage}
\end{minipage}
\caption{(a) MCM, where each \(C_i\) corresponds to a maximal clique in \(\dep(\mathbf{M})\)---dashed red edges/vertices are redundant for vertex-minimality while blue dotted edges/vertices are redundant for edge-minimality;
  (b) \(\dep(\mathbf{M})\)---undirected dependency graph of \(\mathbf{M} = \{M_1, \dots, M_8\}\);
  (c) vertex-minimal minMCM of \(\dep(\mathbf{M})\);
  (d) edge-minimal minMCM of \(\dep(\mathbf{M})\)}
  \label{fig:diff}
\end{figure*}
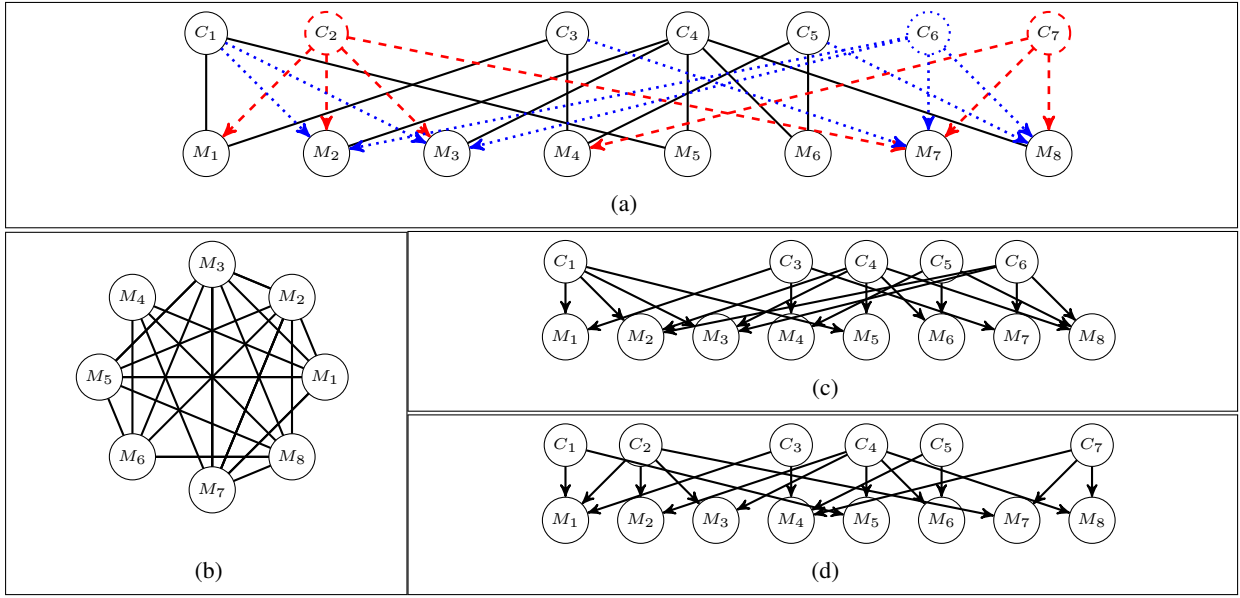

\begin{prop}
  \label{prop:det}
  In a MCM, the set of unconditional (in)dependencies over measurement variables fully determines the set of conditional (in)dependencies over measurement variables.
\end{prop}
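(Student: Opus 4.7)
The plan is to reduce the proposition to the graph-theoretic characterization that, for any $\mathbf{S}\subseteq\mathbf{M}$ and any $M_i,M_j\in\mathbf{M}\setminus\mathbf{S}$, one has $M_i\indep M_j\mid\mathbf{S}$ if and only if there is no sequence $M_i=N_0,N_1,\ldots,N_k=M_j$ of measurement vertices with each consecutive pair unconditionally dependent and with $N_1,\ldots,N_{k-1}\in\mathbf{S}$. Since the right-hand side refers only to pairwise unconditional (in)dependencies, this equivalence immediately yields the proposition. The whole argument will rest on one structural observation: in a MCM every $M\in\mathbf{M}$ has out-degree zero, so on any path in the underlying DAG a measurement vertex appears either as an endpoint or as a collider, and, having no descendants, it can be unblocked only by conditioning on itself.

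For the ``if'' direction (contrapositive), I would start from such a sequence and build a d-connecting path by concatenation. Each edge $N_l\!-\!N_{l+1}$ of $\dep(\mathbf{M})$ corresponds, by the sink property just noted, to a common-ancestor path $N_l\leftarrow\cdots\leftarrow L^{\ast}\rightarrow\cdots\rightarrow N_{l+1}$ passing only through latents and containing no colliders. Gluing these segments end-to-end, every internal $N_l$ becomes a collider that is activated by $N_l\in\mathbf{S}$, while every latent intermediate remains a non-collider outside $\mathbf{S}$; the resulting walk is therefore active given $\mathbf{S}$.

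For the ``only if'' direction (contrapositive), I would take an active path $\pi$ from $M_i$ to $M_j$ given $\mathbf{S}$ and induct on the number of latent colliders on $\pi$. In the base case (no latent colliders on $\pi$), every intermediate measurement vertex on $\pi$ is a collider and therefore lies in $\mathbf{S}$; the segments of $\pi$ between consecutive such vertices involve only latent non-colliders, so each segment witnesses an edge of the desired sequence in $\dep(\mathbf{M})$. In the inductive step, a latent collider $L$ on $\pi$ must be activated by some descendant $M_d\in\mathbf{S}$, which supplies a directed path $\rho\colon L\rightarrow\cdots\rightarrow M_d$. Splitting $\pi$ at $L$ and reattaching each half along $\rho$ yields two active paths $M_i\rightsquigarrow M_d$ and $M_j\rightsquigarrow M_d$ in which $L$ has become a chain rather than a collider; the induction hypothesis then supplies sequences through $\mathbf{S}$ for each half, which glue together at $M_d\in\mathbf{S}$.

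The main obstacle will be making this inductive step airtight. I need to verify that the rerouted paths are still active given $\mathbf{S}$ and have strictly fewer latent colliders. Activity follows from inspecting only the junction at $L$ (now a chain) and the fresh intermediates along $\rho$ (all latent non-colliders, hence outside $\mathbf{S}$); the collider count strictly decreases because $L$ is eliminated while $\rho$ contributes no new colliders. A final routine check reduces walks to simple paths without creating spurious colliders by vertex repetition, completing the argument.
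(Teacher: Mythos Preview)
Your proposal is correct and in fact more complete than the paper's own argument. Both proofs rest on the same structural observation---every $M\in\mathbf{M}$ is a sink, so on any path a measurement vertex is either an endpoint or a collider and, having no descendants, can be unblocked only by conditioning on itself. From there, however, the paper proceeds simply by writing down, for a \emph{single} conditioning variable $M_k$, the two-case rule determining $M_i\indep M_j\mid M_k$ from the three pairwise unconditional relations; it neither treats arbitrary conditioning sets $\mathbf{S}\subseteq\mathbf{M}$ explicitly nor justifies the rule in detail. You instead prove the sharper, fully general characterization---$M_i\indep M_j\mid\mathbf{S}$ iff there is no path in $\dep(\mathbf{M})$ from $M_i$ to $M_j$ whose internal vertices lie in $\mathbf{S}$---and supply the non-trivial direction via induction on the number of latent colliders. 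The one point that deserves an explicit sentence rather than the phrase ``routine check'' is the walk-to-path reduction in the inductive step: it works precisely because $\rho$ is a \emph{directed} path, so whenever a repeated vertex is short-circuited, the $\rho$-side edge at the junction is outgoing, preventing any new collider from appearing (and symmetrically for the $\text{reverse}(\rho)$ half). With that observation spelled out, your argument is sound and strictly subsumes the paper's.
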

\begin{proof}
  The Causal Markov and Causal Faithfulness assumptions (CMA and CFA, respectively) imply that two variables are probabilistically independent if and only if they are \(d\)-separated (allowing us to use independence/d-separation and \(\indep/\perp\) interchangeably).
  Recall from Definition \ref{defn:mm} that all dependence relations (and therefore, by the CMA and CFA, d-connections) between measurement variables are mediated by latent variables.
  Hence, all measurement variables have out-degree 0, and so any measurement variable in a path between two other measurement variables must be a collider and any dependent measurement variables must share at least one latent parent.
  This means that the set of unconditional (in)dependencies over measurement variables fully determines the set of conditional (in)dependencies as follows: for all \(M_i, M_j, M_k \in \mathbf{M}\), 
  \begin{itemize}
  \item \(M_i \not\indep M_j \implies M_i \not\indep M_j \mid M_k\)
  \item \(M_i \indep M_j \implies\\
    \begin{cases}
      M_i \indep M_j \mid M_k, & \text{ if } M_i \indep M_k \text{ or } M_j \indep M_k\\
      M_i \not\indep M_j \mid M_k, & \text{ otherwise} 
    \end{cases}\)
  \end{itemize}
  
\end{proof}
  
As we will see in Section \ref{sec:alg}, even though estimating conditional independencies is not required for our method, doing so nevertheless can help determine whether any of the assumptions have been violated.

\section{MINIMAL MEDIL CAUSAL MODELS AS EDGE CLIQUE COVERINGS}
\label{sec:ecc}

We can now present our main insight:

\begin{prop}
  \label{prop:mlm-ecc}
  The problem of finding a minMCM for a set of measurement variables can be framed as the graph theoretical problem of finding a \textit{minimum edge clique covering} (ECC)\footnote{A minimum ECC over an undirected graph is a collection of cliques that exactly covers its edges, where an edge \(E = (V_i, V_j)\) is covered by clique \(C\) iff  \(V_i, V_j \in C\).} \cite{Erdos_1966,Gramm_2009,Ennis_2012} over the corresponding undirected dependency graph of the measurement variables. %,scheinerman1999fractional
\end{prop}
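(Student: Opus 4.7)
My plan is to set up a bijection between MCMs (of a certain normal form) satisfying observational consistency and measurement-faithfulness on the one hand, and edge clique covers of $\dep(\mathbf{M})$ on the other, and then argue that under this bijection the two notions of minimality agree.

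First, I would establish the forward direction: starting from an MCM $\mathcal{G} = \langle \mathbf{L}, \mathbf{M}, \mathbf{E}\rangle$ that is observationally consistent and measurement-faithful, for each $L \in \mathbf{L}$ let $C_L \subseteq \mathbf{M}$ be the set of measurement children of $L$. I would show $C_L$ is a clique in $\dep(\mathbf{M})$: for any two $M_i, M_j \in C_L$, the path $M_i \leftarrow L \rightarrow M_j$ is an unconditionally open path (no collider, no conditioning set), so $M_i$ and $M_j$ are d-connected; measurement-faithfulness then implies $M_i \not\indep M_j$, giving an edge in $\dep(\mathbf{M})$. Observational consistency then forces the collection $\{C_L : L \in \mathbf{L}\}$ to cover every edge: by Proposition~\ref{prop:det}, every dependent pair of measurements must share at least one latent parent, since all measurements have out-degree zero (Definition~\ref{defn:mm}), so any d-connecting path between them must have a common latent ancestor realized as a shared parent in the DAG (after pushing colliders up through latent ancestors). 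Any latent with no measurement children can be pruned without changing the induced measurement distribution, so the remaining latents each contribute a clique and together cover $\dep(\mathbf{M})$.

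Next, the converse direction: given an ECC $\{C_1, \dots, C_k\}$ of $\dep(\mathbf{M})$, I construct an MCM by introducing one latent $L_i$ for each $C_i$ with directed edges $L_i \to M$ for every $M \in C_i$, taking the $L_i$ to be mutually independent exogenous causes. This MCM satisfies the structural constraints of Definition~\ref{defn:mm}, and its induced UDG is exactly $\dep(\mathbf{M})$: edges in $\dep(\mathbf{M})$ appear because the corresponding pair shares at least one $L_i$; non-edges correspond to pairs that no $C_i$ contains together, so the measurements have no common parent and (by Proposition~\ref{prop:det} applied to the constructed model) are unconditionally independent. So observational consistency and measurement-faithfulness both hold.

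Finally, I would argue that minimality on the two sides is the same concept. Under the bijection above, the number of latents in the MCM equals the number of cliques in the cover, and the number of directed edges in $\mathbf{E}$ equals the total size $\sum_i |C_i|$. Both notions of expressive power referenced in Definition~\ref{defn:mlm} thus translate into standard notions of ``minimum'' for an ECC (minimum number of cliques, or minimum total incidence), which is exactly the vertex-minimal/edge-minimal dichotomy signalled in Figure~\ref{fig:diff}. Measurement-Markov rules out ``padding'' cliques (e.g., adding a latent whose children form a proper subclique already covered, as in Figure~\ref{fig:ex_oc:c}), which is precisely the condition that no clique in the cover is redundant.

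The main obstacle I anticipate is the collider argument in the forward direction: d-connecting paths between two measurements in a general MCM can traverse multiple latents (e.g., $M_i \leftarrow L \to L' \to M_j$), so one must check carefully that observational consistency in the d-separation sense still forces the existence of a single common latent parent and not merely a common latent ancestor. This is exactly where the out-degree-$0$ constraint on $\mathbf{M}$ and the use of unconditional (in)dependencies via Proposition~\ref{prop:det} do the essential work, and I would spell that implication out in detail to secure the clique-cover property.
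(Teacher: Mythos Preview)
Your converse direction (ECC $\to$ MCM) is exactly what the paper does, and it is really all the paper proves: it constructs $\mathcal{G}$ from a minimum ECC, checks measurement-faithfulness (criterion~1) and measurement-Markov (criterion~2), and concludes $\mathcal{G}$ is a minMCM. So on that half you are aligned with the paper, and your translation of latent count and edge count into clique count and total assignment size is correct and matches the discussion around Figure~\ref{fig:diff}.

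The forward direction you propose, however, has a genuine gap, and it is precisely the one you flag as an obstacle but do not actually resolve. Starting from an arbitrary observationally consistent, measurement-faithful MCM, the sets $C_L = \{\text{measurement children of }L\}$ need \emph{not} form an edge clique cover of $\dep(\mathbf{M})$. The MCM in Figure~\ref{fig:ex_oc:c} is a concrete counterexample: it is observationally consistent and measurement-faithful, yet $C_{L_1}=\{M_1\}$, $C_{L_2}=\{M_2,M_3\}$, $C_{L_3}=\{M_3,M_4\}$, and the edges $(M_1,M_2)$ and $(M_1,M_3)$ of $\dep(\mathbf{M})$ are not covered. The dependence between $M_1$ and $M_2$ is carried by the path $M_1 \leftarrow L_1 \to L_2 \to M_2$, which has no collider at all, so ``pushing colliders up through latent ancestors'' does nothing here; the out-degree-$0$ constraint on $\mathbf{M}$ and Proposition~\ref{prop:det} guarantee a common latent \emph{ancestor}, not a common latent \emph{parent}. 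Your pruning step (removing latents with no measurement children) also does not help, since $L_1$ has a measurement child.

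To repair the forward direction you would need to first argue that in a \emph{minimal} MCM there are no latent--latent edges (so every common latent ancestor is in fact a common parent). That is the content of Proposition~\ref{prop:indep}, which in the paper is stated and proved only \emph{after} the present proposition, using the ECC construction itself. If you want a self-contained bijection argument, you must either prove a version of Proposition~\ref{prop:indep} independently, or restrict your ``normal form'' explicitly to MCMs with mutually independent latents and justify why minimality forces that form. The paper sidesteps this entirely by proving only the constructive direction.
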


\begin{proof}
  For a given set of measurement variables \(\mathbf{M}\), denote the \textit{undirected dependency graph} as  \(\dep(\mathbf{M})\), e.g., Figure \ref{fig:ex_oc:a}, where an edge represents dependence and the lack of an edge represents independence.
  Proposition \ref{prop:det} tells us that \(\dep(\mathbf{M})\), though it only encodes unconditional (in)dependencies, contains all necessary information for characterizing observationally consistent MCMs.
  Consider the MCM \(\mathcal{G} = \langle \mathbf{L}, \mathbf{M}, \mathbf{E}\rangle\) constructed from a set of cliques \(\mathbf{C}\) comprising a minimum ECC over \(\dep(\mathbf{M})\) using the following procedure:
  (i) posit a latent \(L_C \in \mathbf{L}\) iff \(C \in \mathbf{C}\) and (ii) posit a directed edge \(E \in \mathbf{E}\) from the latent \(L_C\) to the measurement variable \(M\) iff \(M \in C\).
  In other words, \(\mathbf{G}\) is a MCM with measurement variables \(\mathbf{M}\), one latent for each clique in the minimum ECC over \(\dep(\mathbf{M})\), and an edge from each latent to exactly the measurement variables in the corresponding clique.

  Note that \(\mathbf{G}\) is not only observationally consistent with \(\dep(\mathbf{M})\) but also captures its independencies and is thus faithful, satisfying criterion 1.~of Definition \ref{defn:mlm}.
  Furthermore, the construction of \(\mathbf{G}\) from a minimum ECC ensures that latents are only posited when necessitated by the dependencies between measurements, satisfying criterion 2.~of Definition \ref{defn:mlm}.
  Thus, \(\mathbf{G}\) is an minMCM for \(\dep(\mathbf{M})\).
  
\end{proof}

  A minimum ECC can be minimal in two related but distinct ways:
  the original and more well-studied approach seeks the smallest number of cliques needed to cover all edges (this is equivalent to the \textit{intersection number} \cite{Erdos_1966}), while another justifiable approach is to seek an ECC requiring the fewest assignments of vertices to cliques.
  The corresponding interpretation for minMCMs is vertex-minimal (fewer cliques imply fewer latents imply fewer total vertices) and edge-minimal (fewer assignments of measurement vertices to cliques implies fewer directed edges from latent to measurement vertices), resulting in Proposition \ref{prop:min}.
  There are some undirected dependency graphs for which the vertex-minimal and edge-minimal minMCMs are identical, such as figures \ref{fig:ex_oc} and \ref{fig:prop_6}, but this identity does not hold generally \cite{Ennis_2012} (see Figure \ref{fig:diff}).
  In either approach to minimality, the resulting minMCM induces the same set of dependencies over measurement variables and thus has the same expressive power (w.r.t. the measurement variables).
  We thus see no straightforwardly principled way of picking one approach over the other, and so we present both in hopes that practitioners will use whichever one (or both) they judge most sensible/interesting for their particular application.  

Regardless of which notion of minimality is used, minMCMs have some interesting properties.
  First, they lower bound (i) the number of causal concepts or (ii) the number of functional causal relations that are required to model measurements of a complex system at any level of granularity (Proposition \ref{prop:min}).
 Second, minMCMs contain no causal links between the latent variables (Proposition \ref{prop:indep}).
  Finally, in contrast to factor analysis, a minMCM may require more latent than measurement variables (Proposition \ref{prop:more}).

\begin{prop}
  \label{prop:min}
  For a given set of unconditional pairwise dependencies among measurement variables \(\mathbf{M}\), a minMCM gives a lower bound on the number of latent variables or edges (depending on the measure of minimality is used) required in any (faithful and observationally consistent) MCM.
\end{prop}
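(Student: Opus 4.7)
The plan is to show that any faithful, observationally consistent MCM implicitly determines an ECC of $\dep(\mathbf{M})$ whose number of cliques (respectively assignment count) is at most the number of latents (respectively latent-to-measurement edges) in the MCM. Since the minMCM corresponds to a \emph{minimum} ECC by Proposition \ref{prop:mlm-ecc}, the desired lower bounds follow immediately by comparing against this extracted ECC.

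Concretely, let $\mathcal{G} = \langle \mathbf{L}, \mathbf{M}, \mathbf{E}\rangle$ be any MCM satisfying the hypotheses, and for each $L \in \mathbf{L}$ let $D_L \subseteq \mathbf{M}$ be the set of measurement descendants of $L$. First I would check that each $D_L$ is a clique in $\dep(\mathbf{M})$: any two $M_i, M_j \in D_L$ share $L$ as a common ancestor and are therefore d-connected unconditionally, so measurement-faithfulness forces $M_i \not\indep M_j$, giving an edge in $\dep(\mathbf{M})$. Next I would show that the family $\{D_L\}_{L \in \mathbf{L}}$ covers every edge of $\dep(\mathbf{M})$. For this, take any edge $(M_i, M_j)$; then $M_i \not\indep M_j$, so by the CMA there exists an unconditional d-connecting path between them. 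Because every element of $\mathbf{M}$ has out-degree zero, such a path must begin $M_i \leftarrow L_1$ and end $L_k \to M_j$, and the absence of colliders on the path forces some intermediate latent to be a common ancestor of both endpoints, placing both in $D_L$ for that $L$. Hence $\{D_L\}_{L \in \mathbf{L}}$ is an ECC of $\dep(\mathbf{M})$ with at most $|\mathbf{L}|$ distinct cliques, which yields $|\mathbf{L}| \geq$ the number of latents in the vertex-minimal minMCM.

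For the edge-minimality statement, the analogous count is the assignment total $\sum_L |D_L|$, which equals the number of latent-to-measurement edges whenever $\mathcal{G}$ has no latent-to-latent edges (so that descendants coincide with direct children). I expect the main obstacle to lie exactly here: in general $\mathcal{G}$ may contain latent-to-latent edges, and then $\sum_L |D_L|$ can exceed the direct edge count, so the bound is not immediate. I would resolve this by a flattening argument, iteratively replacing each configuration $L_1 \to L_2$ with direct edges from $L_1$ to the measurement descendants of $L_2$ and deleting $L_2$, and verifying that each such step does not increase the total number of latent-to-measurement edges while preserving observational consistency and faithfulness; applying the flat-case bound to the resulting flattened MCM then gives the desired inequality for the original $\mathcal{G}$. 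A careful case analysis of the flattening step, especially when a single measurement is reached via several latent paths, is where the technical work concentrates.
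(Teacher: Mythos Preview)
The paper's own proof is a single sentence deferring to the construction in Proposition~\ref{prop:mlm-ecc}; your argument is far more detailed, and for the vertex-minimal case it is correct and genuinely fills in what the paper leaves implicit. Extracting from an arbitrary faithful, observationally consistent MCM the family $\{D_L\}_{L\in\mathbf{L}}$ of measurement-descendant sets does yield an ECC of $\dep(\mathbf{M})$, and comparing its cardinality to the clique-minimum ECC gives the latent-count bound.

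The edge-minimal case, however, has a real gap: the flattening step does not satisfy the invariant you claim for it. Take the MCM with latents $L_1,L_2,L_3$ and edges $L_1\to L_2$, $L_3\to L_2$, $L_1\to M_1$, $L_2\to M_2$, $L_3\to M_3$. Here $\dep(\mathbf{M})$ is the path $M_1\text{--}M_2\text{--}M_3$, whose assignment-minimum ECC needs $4$ assignments, yet the MCM has only $3$ latent-to-measurement edges. If you flatten by deleting $L_2$ and routing \emph{both} its latent parents directly to $M_2$, you end up with $4$ latent-to-measurement edges, an increase; if instead you process only the edge $L_1\to L_2$ and delete $L_2$, you destroy the dependency $M_2\not\indep M_3$ and lose observational consistency. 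In general, whenever a latent has several latent parents, eliminating it forces each of those parents to pick up all of its measurement descendants, and the latent-to-measurement edge count can go strictly up. So the proposed invariant (``flattening does not increase the number of latent-to-measurement edges'') is false, and the edge-count argument as written does not go through. The paper's one-line proof does not engage with this issue either; if you want a rigorous edge bound you will need a different accounting, and you should in particular be explicit about whether the quantity being bounded is the number of latent-to-measurement edges or the total number of edges in the MCM, since the example above already shows the former is not lower-bounded by the assignment-minimum.
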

\begin{proof}
  This is a direct consequence of the construction of minMCMs from either the clique-minimum or assignment-minimum ECC of \(\dep(\mathbf{M})\), as described in Proposition \ref{prop:mlm-ecc}.
\end{proof}

\begin{prop}
  \label{prop:indep}
  In a minMCM, each latent variable is d-separated from every other latent variable.
\end{prop}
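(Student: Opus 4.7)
The plan is to exploit the explicit construction of minMCMs from edge clique covers given in Proposition \ref{prop:mlm-ecc}, together with the structural constraints in Definition \ref{defn:mm}. In that construction, each latent $L_C$ is a parent only of the measurement variables contained in its corresponding clique $C$, and in particular no edge in the minMCM has both endpoints in $\mathbf{L}$. I will use this observation to show that every path between two latents must contain a collider at a measurement vertex, and hence is blocked.

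First I would fix two distinct latents $L_i, L_j \in \mathbf{L}$ and take an arbitrary path $\pi$ between them in the minMCM. Since there are no latent-latent edges, $L_i$ and $L_j$ are not adjacent, so $\pi$ passes through at least one intermediate vertex. Moreover the vertex following $L_i$ along $\pi$ must be a measurement (as $L_i$'s only neighbors are measurements), and the edge there points $L_i \to M$. More generally, for any measurement $M$ appearing on $\pi$, constraint 1 of Definition \ref{defn:mm} says $M$ has out-degree $0$, so both edges of $\pi$ incident to $M$ point into $M$; hence $M$ is a collider on $\pi$. Thus $\pi$ contains at least one collider, located at a measurement vertex.

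Finally, since the claimed d-separation is unconditional, the conditioning set is empty; in particular neither any collider on $\pi$ nor any of its descendants lies in the conditioning set. Every collider along $\pi$ therefore blocks it, so $\pi$ is blocked. As $\pi$ was arbitrary, $L_i$ and $L_j$ are d-separated, proving the proposition.

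The main obstacle is a matter of interpretation rather than technical difficulty: the argument above applies directly to the minMCM built by the recipe of Proposition \ref{prop:mlm-ecc}. If one wants to read ``a minMCM'' more broadly as \emph{any} MCM satisfying Definition \ref{defn:mlm}, one must additionally verify that no minimal MCM can contain latent-to-latent edges. This follows from the minimality criteria, since any edge $L_a \to L_b$ either reproduces dependencies already induced by the ECC-based parent structure of $L_b$ (making either $L_a$ or the edge redundant for vertex- or edge-minimality, respectively) or introduces a dependency between measurements that already share a common ancestor through the clique assignment, so the edge can be pruned without changing $\dep(\mathbf{M})$.
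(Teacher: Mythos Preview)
Your proof is correct and, at its core, follows the same route the paper points to: both ultimately appeal to the ECC-based construction of Proposition~\ref{prop:mlm-ecc} (and Algorithm~\ref{alg:obs}), in which latents have edges only to measurements. The difference is one of emphasis and detail. The paper leads with a minimality argument---if two latents were d-connected, the measurement dependencies they jointly induce could be reproduced by a single latent with fewer edges, contradicting minimality---and then simply defers to the construction for the formal claim. You instead unpack the construction explicitly: no latent--latent edges, hence every path between latents visits a measurement, every such measurement is a collider by the out-degree-zero constraint, and unconditioned colliders block. Your argument is the more rigorous d-separation verification the paper leaves implicit, while the paper's ``merge the latents'' intuition is what you sketch only briefly in your final paragraph. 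Both buy the same conclusion; yours makes the graph-theoretic mechanism transparent, the paper's makes the role of minimality transparent.
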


\begin{proof}
  Intuitively, this is a result of the definition of a minMCM being minimal in the sense of least expressive (and thus having as few latents or edges):
  if two latent variables are d-connected, then the dependencies among measurement variables that they induce could also instead be induced by a single latent variable (which also results in fewer edges).
  A minMCM has no redundant latent variables or edges and therefore no d-connected latent variables.
  For example, note in that the MCMs in figures \ref{fig:ex_oc:b} and \ref{fig:ex_oc:c} induce the same d-separations over the measurement variables, but that \ref{fig:ex_oc:b} with its d-separated latents has the fewer latents and fewer total edges.
  More formally, this follows directly from procedure for constructing an minMCM in Proposition \ref{prop:mlm-ecc} and Algorithm \ref{alg:obs}.
\end{proof}

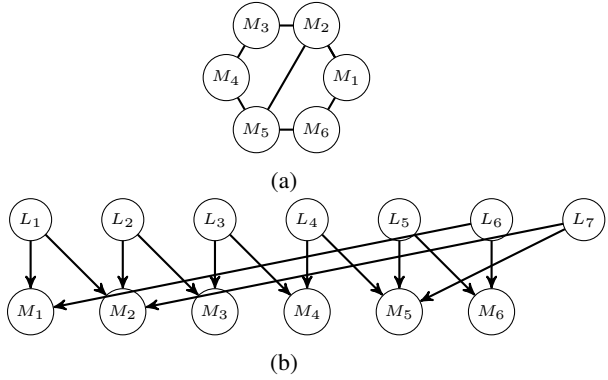
\begin{figure}[t]
  \tiny
  \tikzstyle{EdgeStyle} = [-]
  \subcaptionbox{\label{fig:prop_6:a}}{
    \begin{minipage}{.45\textwidth}  \centering
      \label{fig:prop_6:a}
      \begin{tikzpicture}[scale=1]
        \centering
      \SetGraphUnit{.8}
      \begin{scope}[execute at begin node=$, execute at end node=$]
        \Vertices{circle}{M_1, M_2, M_3, M_4, M_5, M_6}
        % \Vertex{1} \EA(1){2} \EA(2){3} \SO(3){4} \WE(4){5} \WE(5){6}
        \Edges(M_1, M_2, M_3, M_4, M_5, M_6, M_1, M_2, M_5)
      \end{scope}
    \end{tikzpicture}
  \end{minipage}
  }
  \subcaptionbox{\label{fig:prop_6:b}}{
    \begin{minipage}{.45\textwidth}  \centering
      \tikzstyle{EdgeStyle} = [->,>=stealth']%
      \label{fig:prop_6:b}
      \begin{tikzpicture}[scale=.35]
        \SetGraphUnit{3.5}
        \begin{scope}[execute at begin node=$, execute at end node=$]
          \Vertex{M_1} \EA(M_1){M_2} \EA(M_2){M_3} \EA(M_3){M_4} \EA(M_4){M_5} \EA(M_5){M_6}
          \NO(M_1){L_1} \NO(M_2){L_2} \NO(M_3){L_3} \NO(M_4){L_4} \NO(M_5){L_5} \NO(M_6){L_6} \EA(L_6){L_7}
          
          \Edges(L_1, M_1) \Edges(L_1, M_2)
          \Edges(L_2, M_2) \Edges(L_2, M_3)
          \Edges(L_3, M_3) \Edges(L_3, M_4)
          \Edges(L_4, M_4) \Edges(L_4, M_5)
          \Edges(L_5, M_5) \Edges(L_5, M_6)
          \Edges(L_6, M_6) \Edges(L_6, M_1)
          \Edges(L_7, M_2) \Edges(L_7, M_5)
        \end{scope}
      \end{tikzpicture}
    \end{minipage}
  }
  \caption{(a) example \(\dep(\mathbf{M})\) for which the minMCM (b) has 6 measurement variables and 7 latent variables}
\label{fig:prop_6}
\end{figure}

\begin{prop}
  \label{prop:more}
  There exist minMCMs containing more latent than measurement variables.
\end{prop}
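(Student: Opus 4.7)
The plan is to prove this existence statement by explicitly exhibiting a dependency graph whose minimum edge clique cover strictly exceeds its vertex count, and then invoking Proposition \ref{prop:mlm-ecc} to translate this into a minMCM with the desired property. The natural candidate is already suggested by the figure: take $\dep(\mathbf{M})$ to be the 6-cycle on $\{M_1,\ldots,M_6\}$ (edges $M_i M_{i+1}$ mod $6$) together with the single chord $\{M_2,M_5\}$, giving 6 vertices and 7 edges as shown in Figure \ref{fig:prop_6:a}.

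First, I would establish that this graph is triangle-free. The chord $\{M_2,M_5\}$ partitions the hexagon into the two 4-cycles $M_2$--$M_3$--$M_4$--$M_5$--$M_2$ and $M_2$--$M_1$--$M_6$--$M_5$--$M_2$; a brief edge-by-edge inspection shows that neither these nor the original hexagon contain any three mutually adjacent vertices. Consequently, every clique in $\dep(\mathbf{M})$ has size at most $2$, so the maximal cliques are exactly the 7 edges.

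Next, since each of the 7 edges must be contained in some clique in any ECC, and no clique can cover more than one edge (as triangles are absent), every ECC must contain at least 7 cliques, with equality achieved by taking each edge as its own clique. Applying the construction in the proof of Proposition \ref{prop:mlm-ecc} then produces a minMCM with exactly 7 latent variables and 6 measurement variables, matching Figure \ref{fig:prop_6:b}. Because the graph is triangle-free, the vertex-minimal and edge-minimal formulations coincide on this example (every cover uses only edge-sized cliques, so counting cliques and counting vertex-to-clique assignments are both minimized simultaneously), so the witness works under either notion of minimality from Section \ref{sec:ecc}.

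The main obstacle is essentially bookkeeping: one must verify triangle-freeness cleanly and explicitly argue that the edge count 7 really does lower-bound the cover size, rather than treating it as self-evident. Once that is in place, the conclusion follows immediately from Proposition \ref{prop:mlm-ecc}, and no additional machinery is needed.
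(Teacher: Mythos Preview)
Your proposal is correct and follows essentially the same approach as the paper: both use the 6-cycle with the chord $\{M_2,M_5\}$ from Figure~\ref{fig:prop_6} and invoke the ECC characterization of Proposition~\ref{prop:mlm-ecc} to obtain a minMCM with 7 latents over 6 measurements. The paper's proof is terser---it cites the Erd\H{o}s $n^2/4$ bound and simply asserts the intersection number is 7---whereas you spell out the triangle-freeness argument that justifies this count, but the underlying idea is identical.
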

\begin{proof}
  This follows from the graph theoretical characterization of minMCMs: there are at least as many latent variables as the intersection number of \(\dep(\mathbf{M})\), which in a graph with \(n\) vertices is (non-trivially) upper bounded by \(\frac{n^2}{4}\) \cite{Erdos_1966}.
  A simple example can be found when \(\dep(\mathbf{M})\) is as in Figure \ref{fig:prop_6}, resulting in \(n = 6\) nodes and an intersection number of \(i = 7\).
\end{proof}

\section{A minMCM-FINDING ALGORITHM AND ITS COMPLEXITY}
\label{sec:alg}

The procedure in the proof of Proposition \ref{prop:mlm-ecc} for constructing a minMCM from an undirected dependency graph leads directly to Algorithm \ref{alg:obs}. 

\IncMargin{1.5em}
\begin{algorithm}
  \SetKwInOut{Input}{Input}
  \SetKwInOut{Output}{Output}

\Indm
  \Input{ undirected dependency graph, \(\dep(\mathbf{M})\), over the measurement variables \(\mathbf{M}\)}
  \Output{ vertex-minimal or assignment-minimal MCM \(\mathcal{G}\) over \(\mathbf{M}\)}

  \Indp
  \BlankLine
  initialize edgeless graph with a vertex for each \(M \in \mathbf{M}\)\;
  find a clique-minimum or assignment-minimum edge clique cover of \(\dep(\mathbf{M})\), using the algorithm in Fig. 3 of \cite{Gramm_2009} or the algorithm \texttt{FIND-AM} of \cite{Ennis_2012}, respectively;\\
  \For{each clique \(C\) in the cover}{
    add vertex \(L\) with edges directed to each \(M \in C\);}
 \caption{constructing a minimal MeDIL causal model (minMCM)}
 \label{alg:obs}
\end{algorithm}
\DecMargin{1.5em}% \vspace{-3em}

Notice that Line 2 in Algorithm \ref{alg:obs} is to find a minimum ECC of \(\dep(\mathbf{M})\).
Nearly all of the computational complexity of Algorithm \ref{alg:obs} comes from this step, which is known to be an NP-hard problem, and so the choice of an efficient ECC-finding algorithm and implementation is especially important.

In case a clique-minimum ECC (and therefore vertex-minimum minMCM) is preferred, \cite{Gramm_2009} provides an exact algorithm.
The exact algorithm finds an ECC in \(\mathcal{O}(f(2^k) + n^4)\) time, where \(k\) is the number of cliques in the ECC and \(n\) is the number of vertices in the undirected graph, and is thus fixed-parameter tractable.
Furthermore, \cite{Cygan_2016} gives a lower bound on the complexity of the clique-minimum ECC problem and argues that the algorithm is probably optimal.
\citet{Gramm_2009} also provide a free/libre implementation of their algorithm, though it has not been maintained for some time and does not easily run on most modern machines.

In case an assignment-minimum ECC (and therefore edge-minimum minMCM) is preferred, \cite{Ennis_2012} provides an exact algorithm.
Though they do not offer an analyses of its complexity, it is essentially a backtracking algorithm based on \cite{Bron:1973}'s maximal clique finding algorithm, which has time complexity of\(\mathcal{O}(3^{n/3})\), and so this assignment-minimum ECC finding algorithm has an even larger complexity.
% \citet{Ennis_2012} do not provide a free/libre implementation of their algorithm.

As far as we are aware, no other implementations of the clique-minimum or assignment-minimum ECC finding algorithms exist.
To remedy this, we have implemented and released these and a few other related causal inference tools as a free/libre Python package at \href{https://medil.causal.dev}{https://medil.causal.dev}.
Already \citet{Gramm_2009} and \citet{Ennis_2012} showed that their algorithms perform in a reasonable amount of time on moderately sized graphs, e.g., returning a solution containing 100 cliques in a matter of minutes.
Unsurprisingly, given the hardware advancements of the past decade, our implementation performs even better, e.g. finding the 614 clique solution to the 61 node graph presented in the next section in only 39 seconds using an Intel Core i7-8700K CPU.

\section{APPLICATION}
\label{sec:application}
In this section we demonstrate the necessary steps to get from a raw data set to a minMCM output from our algorithm.
We then hint at how this output can be analyzed and suggest some conclusions that can be drawn from it.
Note that our contribution in this paper is theoretical, and the point of the following application is to make some of our theoretical claims and the potential use cases more concrete.

\subsection{THE DATA AND PREVIOUS ANALYSES}
\label{sec:data-prev-analys}

The \textit{Stress, Religious Coping, and Depression} data set\footnote{We would like to thank David Danks and especially Joseph Ramsey at Carnegie Mellon University for providing us with a copy.}
was collected by Bongjae Lee from the University of Pittsburgh in 2003.
There were 127 participants answering a total of 61 questions: 21 designed to measure stress, 20 for religious coping, and 20 for depression---see \cite{silva2005generalized} for the full questionnaire.
This data has been analyzed by several other measurement model methods \cite{silva2005generalized,Silva:2006,Kummerfeld:2014,Kummerfeld:2016}, and their findings (which largely agree with each other) can be briefly summarized as follows:
(i) in contrast to the design goal, most of the measurement variables are ``impure'' in that they are caused by multiple latent variables; % not quite right, but close enough?
(ii) they are able to find some subsets (ranging in number from three to nine)  of ``pure'' measurement variables that passed their significance tests and some of which suggest a model similar to what Lee hypothesized containing three latent variables---the first of which causes only measurement variables of stress, the second only depression, and the third only coping;
(iii) most of their models scoring the highest significance are more complex models than Lee's model (the most complex containing eight latents \cite{silva2005generalized}).

\subsection{ANALYSIS USING minMCMS}
\label{sec:analys-using-minmcms}
Notice that the input to Algorithm \ref{alg:obs} is an undirected dependency graph, while in practice one does not have direct knowledge of the (in)dependencies themselves but only samples of the measurement variables.
It is therefore necessary to first estimate the independencies before applying this algorithm.
Because the algorithm is agnostic to the test statistic, it is not constrained to linear methods such as Pearson correlation (for which ``\(X \indep Y \implies \mathrm{corr}(X, Y) = 0\)'' but not the converse) but can leverage the power of nonlinear independence tests \cite{Gretton_2005, Szekely_2007}.
We used the distance correlation \cite{Szekely_2007} as our test statistic (with the property ``\(X \indep Y \iff \mathrm{dCorr}(X, Y) = 0\)'') and performed 1000 random permutations of the measurement variables to sample from the null-distribution \cite{Dwass_1957}.
The \(p\)-value for each pair was then calculated as the proportion of the permutation tests in which the absolute distance correlation of the pair of variables with permuted samples exceeded that of the original pair.
Finally, independence between two variables was concluded if the distance correlation between them was less than 0.1 and the corresponding \(p\)-value was greater than 0.1.%
% \mgw{}{[should we insert a sentence here in how far the results depend on that (somewhat arbitrary) threshold?]}. -----maybe the related comment in the discussion section is sufficient, or a better place?
\footnote{As one would expect, using a nonlinear measure of dependence allows us to detect more dependencies: we found almost 31\% of the over 1500 estimated nonlinear pairwise dependencies (i.e., edges in the UDG) to be undetectable using the linear Pearson correlation.}

The binary-valued \(61\times 61\) matrix corresponding to the estimated independencies, with a 0 for independence and 1 for dependence thus forms the adjacency matrix for the UDG that is input for Algorithm \ref{alg:obs}.
We decided to find a latent-minimal minMCM, and the result has 614 latent variables.
It is thus too complex to be legibly displayed here, so we instead present % a part of its incidence matrix (Figure \ref{fig:incidence}) along with
figures \ref{fig:histograms} and \ref{fig:heatmaps} to facilitate analysis of the results.

\begin{figure}[t]
  \centering
  % \subcaptionbox{\label{fig:histograms:a}}{}
  % \subcaptionbox{\label{fig:histograms:b}}{}
      \includegraphics[width=\linewidth]{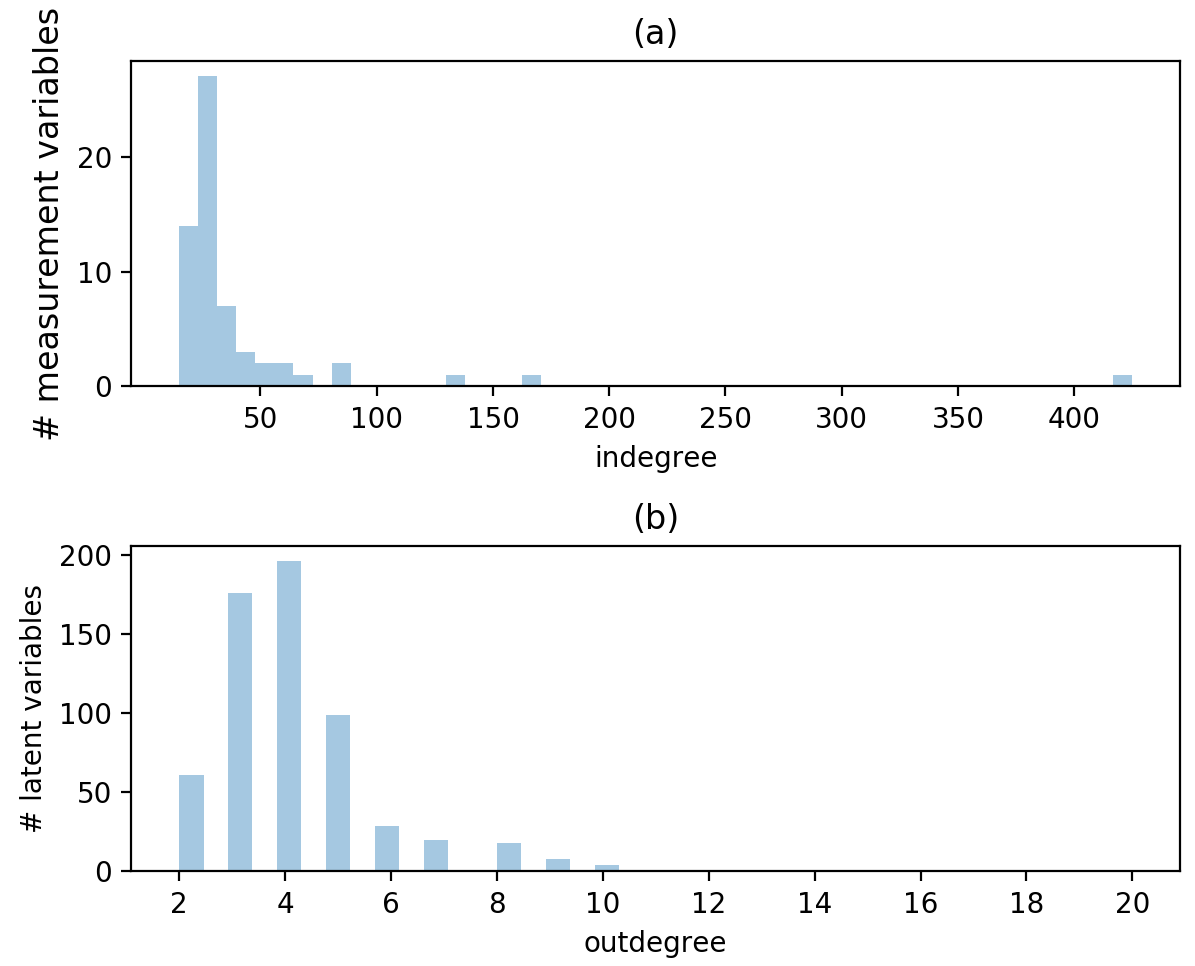}
  \caption{histograms showing (a) indegree of the measurement variables and (b) the outdegree of the latent variables}
\label{fig:histograms}
\end{figure}

\begin{figure}[t]
  \centering
  % \subcaptionbox{\label{fig:heatmaps:a}}{}
  % \subcaptionbox{\label{fig:heatmaps:b}}{}
  \includegraphics[width=.8\linewidth]{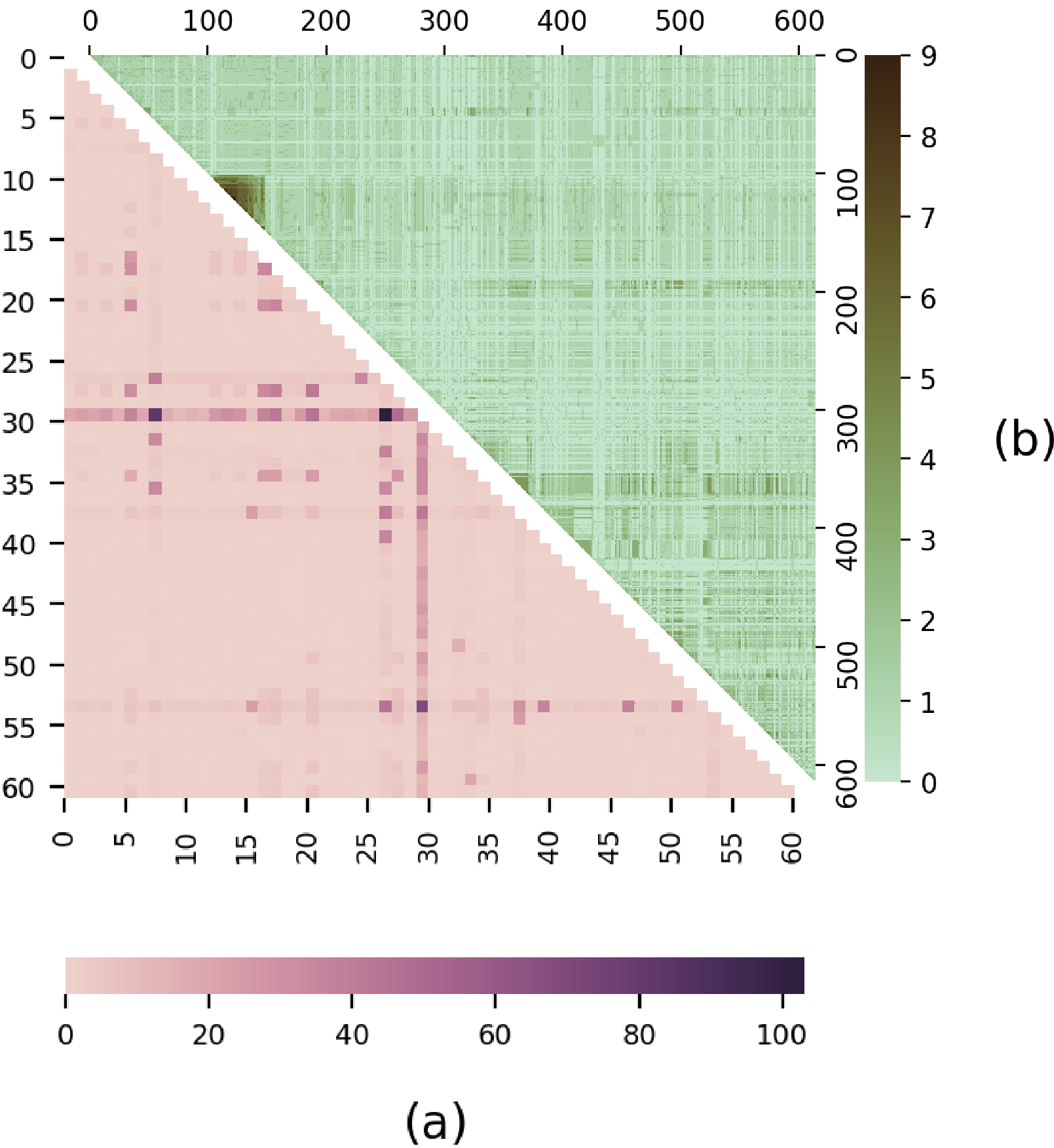}  
  \caption{heatmaps showing (a) the number of latent variables each pair of the 61 measurement variables have in common and (b) the number of measurement variables each pair of the 614 latent variables have in common}
  \label{fig:heatmaps}
\end{figure}

Looking at the histogram in Figure \ref{fig:histograms}(a), we find a median indegree (i.e., number of latent causes) of the measurement variables of 27, but with one in particular, \(M_{30}\), having 425.
The item in the questionnaire corresponding to \(M_{30}\) was the ninth in the set designed to measure depression, and it asked participants how frequently the event ``I thought my life had been a failure'' occurred in the preceding week.
Semantically, it makes sense that this item would have many more latent causes than the other items, because its scope is much larger, requiring reflection on the participants' entire life up to that point instead of just during the week in question, as is the case for other depression items, such as ``I enjoyed life'' (\(M_{37}\), 24 latent causes) and ``I felt sad'' (\(M_{39}\), 25 latent causes).
Furthermore, looking at Figure \ref{fig:heatmaps}(a), showing the number of latents each pair of measurement variables share, we see that \(M_{30}\) shares a relatively high amount of latent causes with the other measurement variables (median of 21), while for \(M_{37}\) and \(M_{39}\) the median of shared latent causes is one. 
Our analysis thus agrees with the previous analyses described in Section \ref{sec:data-prev-analys} insofar as we also find many ``impure'' measurement variables, but extends their insights by differentiating between measurement variables that are best considered a general or mixed measurement ($M_{30}$) and those that, even though they are also impure, span different subsets of the latent space ($M_{37}$ and $M_{39}$).

Looking at the outdegree (i.e., the number of measurement variables a latent causes) in Figure \ref{fig:histograms}(b) we find a median of four and a range from 2 to 20. The number of measurements shared by each pair of latent variables reveals further structure (Figure \ref{fig:heatmaps}(b)). In particular, the incidence matrix representation of the latents corresponding to the block structure between approximately \(L_{105}\)--\(L_{145}\) reveals seven measurement variables that these latents mostly have in common, corresponding to four stress and three depression items. On the other side, 41\% (roughly 74k) pairs of variables do not share any measurement variables. Such insights may be used to simplify models, e.g.~by removing measurement variables that induce multiple latents, or to build subsets of ``pure'' measurement variables, in the sense that the resulting measurement subsets are caused by disjoint sets of latents\footnote{Note that this is a bit different from the notion of "pure" used in the other measurement literature}.

Finally, we note that there is more structure to be explored in the minMCM and figures \ref{fig:histograms} and \ref{fig:heatmaps} % (especially if we were to further use methods from network analysis)
, but that is beyond our present scope.
Note that the type of structure analyzed here emerges only when considering an ECC (i.e. patterns in the UDG, which is an abstraction of the correlation matrix) and not from the correlation matrix itself---analogous to higher-moment statistics or higher-order logic.

Our findings are not inconsistent with previous analyses of this data set, as can be seen by their agreement with points (i) and (iii) in Section \ref{sec:data-prev-analys}, and should rather be seen as complementary.
More generally our algorithm and corresponding analyses do not subsume existing methods but rather provide a novel perspective that allows us to focus on otherwise unutilized structure in measurement data, which in addition to helping to model the data also aids in, e.g., assessing and revising questionnaires and instruments.

\section{DISCUSSION}
\label{sec:discussion-future-work}
Having in the preceding sections presented our minMCM finding algorithm, its supporting theory, and a demonstration application, we now conclude with two main directions for future work:
the first direction is primarily concerned with applications of Algorithm \ref{alg:obs} in its current state or requiring only minor modifications,
while the second is primarily concerned with significantly extending Algorithm \ref{alg:obs} and with developing new methods based on insights gleaned during its development.

\subsection{FUTURE APPLICATIONS AND MINOR MODIFICATIONS}
\label{sec:future-appl-minor}

% probably remove some of this?
Being constraint-based, the Algorithm \ref{sec:alg} relies on estimated independencies.
Thus, errors in the inference of minMCMs come not from Algorithm \ref{alg:obs} itself but rather from the estimation of independencies that it (along with many other causal inference methods) requires as input.
In this regard, a single incorrectly estimated independence can in the (unlikely) worst case\footnote{This is when the inclusion/exclusion of a single edge in an \(n \geq 3\) vertex undirected dependency graph makes the difference between the graph having \(2(n-2)\) maximal cliques that are all edges and \(n-2\) maximal cliques that are all triangles. Fortunately, such precarious structures are easy to detect and can be removed by picking different sets of measurements.}%
result in incorrectly doubling or halving the number of estimated latents or edges.
In any case, as mentioned at the end of Section \ref{sec:obs_mcm}, further estimates of conditional independencies can help corroborate or refute the estimated unconditional independencies.
More detailed examination is needed to make this more theoretically precise as well as to determine how much of a problem this is likely to pose for real data.

One final caveat for interpreting minMCMs is that, for complex graphs, there can be multiple minimum ECCs (for both types of minimality), each with the same minimum number of cliques or assignments.
Thus, while using a minMCM to reason about the minimum number of edges or latents is always valid, stronger conclusions may require that the graph \(\dep(\mathbf{M})\) admits only one minMCM (which is simple enough to test) or that further assumptions or background knowledge are used to justify one minMCM over other observationally consistent ones.
To this end, the (non-minimal) MCM corresponding to maximal cliques (e.g., Figure \ref{fig:diff:b}) may be especially interesting, because it contains all observationally consistent MCMs (including the minMCMs in \ref{fig:diff:c} and \ref{fig:diff:d}).

Another promising aspect of our approach for future work is its extensibility, which results from establishing MeDIL causal models as a causal semantics for edge clique covers.
Though we have so far focused on minimal ECCs, a MCM corresponding to \textit{any} ECC for a given UDG is guaranteed to be measurement-faithful and causally sufficient (though not minimal or measurement-Markov) for the corresponding distribution of measurement variables.
Using a different class of ECCs simply requires a different algorithm to be used in Line 2 of Algorithm \ref{alg:obs}.
Just as we expressed simplicity of the causal model in terms of the number of latents (or edges) in the MCM and therefore the number of cliques (or assignments) in the ECC, \textit{any} property of a causal model that can be expressed in terms of properties of an ECC can be used to repurpose an ECC-finding algorithm for the desired CSL task.
For example, developments in network science \cite{Conte_2019} make it possible for ECC-based causal analysis of very large graphs, even containing up to millions of nodes.

\subsection{EXTENSIONS AND FURTHER DEVELOPMENTS}
\label{sec:sign-extens-furth}

Because Algorithm \ref{alg:obs} returns a causally sufficient DAG, it should be possible to actually learn a corresponding fully specified functional causal model using, e.g., some version of nonlinear ICA or variational autoencoders \cite{khemakhem2019variational} that has been modified to take into account the conditional independence structure.
This could potentially lead to the development of a causal, non-parametric generalization of factor analysis \cite{martin1994discrete} which would still be interestingly different from similar existing work \cite{hoyer2008estimation,Kummerfeld:2016}.
Furthermore, since learning such a FCM would require the data set and not just its CI relations, it would be straight-forward to make a score-based adaptation of Algorithm \ref{alg:obs} inspired by \cite{elidan2001discovering}, where cliques are picked according to maximizing a scoring criterion instead of (possibly misestimated) CI relations.
This would help overcome the potential pitfall mentioned in Section \ref{sec:future-appl-minor}.

Additionally, notice that formally, (though not semantically) \textit{every} DAG is a MCM:
any given DAG \(\mathcal{G}\) can be partitioned into sink nodes \(\mathbf{S}\) and non-sink nodes \(\mathbf{N}\), in which case it is observationally consistent with respect to \(\mathbf{S}\) to any other DAG \(\mathcal{H}\) whose (sub)set of sink nodes \(\mathbf{S'}\) has the same UDG as \(\mathbf{S}\).
This allows for some of the theory developed in sections \ref{sec:obs_mcm} and \ref{sec:ecc} to be easily repurposed to characterizing subset-Markov equivalence classes for DAGs with different sets of variables, as long as they have some subset of sink nodes \(\mathbf{S}=\mathbf{S'}\) in common.
This may help connect causal coarsening \cite{Chalupka_2016} with causally consistent transformations between micro- and macro-models \cite{Rubenstein:2017} and causal abstraction \cite{beckers2019abstracting}.

% In cases where \(|\mathbf{N}|\) is greater than \(\mathbf{S}\), the nodes \(\mathbf{N}\) can constitute a causally coarsened representation of \(\mathbf{S}\) or macromodel or abstraction
% However, because MCMs can also in some cases require fewer \(|\mathbf{N}|\) than \(\mathbf{S}\), they provide some insight into a ``refined'' (instead of coarsened) representation or micromodel or  

\vspace{-1em}
\renewcommand{\bibsection}{\subsubsection*{References}} % uai2020.sty is poorly written
\bibliographystyle{apalike}
\bibliography{mcm}

\begin{thebibliography}{}

\bibitem[Andersson et~al., 1997]{andersson1997characterization}
Andersson, S.~A., Madigan, D., and Perlman, Michael~D, e.~a. (1997).
\newblock A characterization of markov equivalence classes for acyclic
  digraphs.
\newblock {\em The Annals of Statistics}, 25(2):505--541.

\bibitem[Beckers and Halpern, 2019]{beckers2019abstracting}
Beckers, S. and Halpern, J.~Y. (2019).
\newblock Abstracting causal models.
\newblock In {\em Proceedings of the AAAI Conference on Artificial
  Intelligence}, volume~33, pages 2678--2685.

\bibitem[Bron and Kerbosch, 1973]{Bron:1973}
Bron, C. and Kerbosch, J. (1973).
\newblock Algorithm 457: finding all cliques of an undirected graph.
\newblock {\em Communications of the ACM}, 16(9):575–577.

\bibitem[Chalupka et~al., 2016]{Chalupka_2016}
Chalupka, K., Eberhardt, F., and Perona, P. (2016).
\newblock Causal feature learning: an overview.
\newblock {\em Behaviormetrika}, 44(1):137--164.

\bibitem[Chickering, 2002]{chickering2002optimal}
Chickering, D.~M. (2002).
\newblock Optimal structure identification with greedy search.
\newblock {\em Journal of machine learning research}, 3(Nov):507--554.

\bibitem[Conte et~al., 2019]{Conte_2019}
Conte, A., Grossi, R., and Marino, A. (2019).
\newblock Large-scale clique cover of real-world networks.
\newblock {\em Information and Computation}, 270:104464.

\bibitem[Cygan et~al., 2016]{Cygan_2016}
Cygan, M., Pilipczuk, M., and Pilipczuk, M. (2016).
\newblock Known algorithms for edge clique cover are probably optimal.
\newblock {\em SIAM Journal on Computing}, 45(1):67--83.

\bibitem[Dwass, 1957]{Dwass_1957}
Dwass, M. (1957).
\newblock Modified randomization tests for nonparametric hypotheses.
\newblock {\em The Annals of Mathematical Statistics}, 28(1):181--187.

\bibitem[Elidan et~al., 2001]{elidan2001discovering}
Elidan, G., Lotner, N., Friedman, N., and Koller, D. (2001).
\newblock Discovering hidden variables: A structure-based approach.
\newblock In {\em Advances in Neural Information Processing Systems}, pages
  479--485.

\bibitem[Ennis et~al., 2012]{Ennis_2012}
Ennis, J.~M., Fayle, C.~M., and Ennis, D.~M. (2012).
\newblock Assignment-minimum clique coverings.
\newblock {\em Journal of Experimental Algorithmics}, 17(1):1.1.

\bibitem[Erd{\H o}s et~al., 1966]{Erdos_1966}
Erd{\H o}s, P., Goodman, A.~W., and P{\'o}sa, L. (1966).
\newblock The representation of a graph by set intersections.
\newblock {\em Canadian Journal of Mathematics}, 18:106--112.

\bibitem[Gramm et~al., 2009]{Gramm_2009}
Gramm, J., Guo, J., H{\"u}ffner, F., and Niedermeier, R. (2009).
\newblock Data reduction and exact algorithms for clique cover.
\newblock {\em Journal of Experimental Algorithmics}, 13:2.2.

\bibitem[Granger, 1969]{Granger_1969}
Granger, C. W.~J. (1969).
\newblock Investigating causal relations by econometric models and
  cross-spectral methods.
\newblock {\em Econometrica}, 37(3):424.

\bibitem[Gretton et~al., 2005]{Gretton_2005}
Gretton, A., Bousquet, O., Smola, A., and Schölkopf, B. (2005).
\newblock Measuring statistical dependence with hilbert-schmidt norms.
\newblock {\em Algorithmic Learning Theory}, pages 63--77.

\bibitem[Holland, 1986]{Holland:1986}
Holland, P. (1986).
\newblock Statistics and causal inference.
\newblock {\em Journal of the American Statistical Association},
  81(396):945--960.

\bibitem[Hoover, 2006]{Hoover_2006}
Hoover, K.~D. (2006).
\newblock Causality in economics and econometrics.
\newblock {\em SSRN Electronic Journal}.

\bibitem[Hoyer et~al., 2008]{hoyer2008estimation}
Hoyer, P.~O., Shimizu, S., Kerminen, A.~J., and Palviainen, M. (2008).
\newblock Estimation of causal effects using linear non-gaussian causal models
  with hidden variables.
\newblock {\em International Journal of Approximate Reasoning}, 49(2):362--378.

\bibitem[Khemakhem et~al., 2019]{khemakhem2019variational}
Khemakhem, I., Kingma, D.~P., and Hyv{\"a}rinen, A. (2019).
\newblock Variational autoencoders and nonlinear ica: A unifying framework.
\newblock {\em arXiv preprint arXiv:1907.04809}.

\bibitem[Kindermann and Snell, 1980]{Kindermann_1980}
Kindermann, R. and Snell, J.~L. (1980).
\newblock Ii. markov fields on graphs.
\newblock {\em Contemporary Mathematics}, pages 24--33.

\bibitem[Kummerfeld and Ramsey, 2016]{Kummerfeld:2016}
Kummerfeld, E. and Ramsey, J. (2016).
\newblock Causal clustering for 1-factor measurement models.
\newblock In {\em Proceedings of the 22nd ACM SIGKDD International Conference
  on Knowledge Discovery and Data Mining}, pages 1655--1664. ACM.

\bibitem[Kummerfeld et~al., 2014]{Kummerfeld:2014}
Kummerfeld, E., Ramsey, J., Yang, R., Spirtes, P., and Scheines, R. (2014).
\newblock Causal clustering for 2-factor measurement models.
\newblock In {\em Joint European Conference on Machine Learning and Knowledge
  Discovery in Databases}, pages 34--49. Springer.

\bibitem[Martin and VanLehn, 1994]{martin1994discrete}
Martin, J. and VanLehn, K. (1994).
\newblock Discrete factor analysis: Learning hidden variables in bayesian
  networks (technical report no. lrdc-onr-94-1).
\newblock {\em LRDC, University of Pittsburgh: Pittsburgh, Pennsylvania}.

\bibitem[Parascandola, 2001]{Parascandola_2001}
Parascandola, M. (2001).
\newblock Causation in epidemiology.
\newblock {\em Journal of Epidemiology \& Community Health}, 55(12):905--912.

\bibitem[Pearl, 2000]{Pearl:2000}
Pearl, J. (2000).
\newblock {\em Causality: Models, Reasoning, and Inference}.
\newblock Cambridge University Press.

\bibitem[Pearl and Verma, 1995]{Pearl:1995}
Pearl, J. and Verma, T. (1995).
\newblock A theory of inferred causation.
\newblock In {\em Studies in Logic and the Foundations of Mathematics}, volume
  134, pages 789--811. Elsevier.

\bibitem[Rubenstein et~al., 2017]{Rubenstein:2017}
Rubenstein, P., Weichwald, S., Bongers, S., Mooij, J., Janzing, D.,
  Grosse-Wentrup, M., and Sch\"olkopf, B. (2017).
\newblock Causal consistency of structural equation models.
\newblock In {\em Proceedings of the Thirty-Third Annual Conference on
  Uncertainty in Artificial Intelligence (UAI 2017)}, page ID11.

\bibitem[Seth et~al., 2015]{Seth_2015}
Seth, A.~K., Barrett, A.~B., and Barnett, L. (2015).
\newblock Granger causality analysis in neuroscience and neuroimaging.
\newblock {\em Journal of Neuroscience}, 35(8):3293--3297.

\bibitem[Silva and Scheines, 2005]{silva2005generalized}
Silva, R. and Scheines, R. (2005).
\newblock Generalized measurement models.
\newblock Technical report, Carnegie Mellon University.

\bibitem[Silva et~al., 2006]{Silva:2006}
Silva, R., Scheines, R., Glymour, C., and Spirtes, P. (2006).
\newblock Learning the structure of linear latent variable models.
\newblock {\em Journal of Machine Learning Research}, 7(Feb):191--246.

\bibitem[Spirtes and Glymour, 1991]{Spirtes_1991}
Spirtes, P. and Glymour, C. (1991).
\newblock An algorithm for fast recovery of sparse causal graphs.
\newblock {\em Social Science Computer Review}, 9(1):62--72.

\bibitem[Spirtes et~al., 2000]{Spirtes:2000}
Spirtes, P., Glymour, C., and Scheines, R. (2000).
\newblock {\em Causation, Prediction, and Search}.
\newblock {MIT} Press.

\bibitem[Spirtes et~al., 1999]{spirtes1999algorithm}
Spirtes, P., Meek, C., and Richardson, T. (1999).
\newblock An algorithm for causal inference in the presence of latent variables
  and selection bias.
\newblock {\em Computation, causation, and discovery}, 21:1--252.

\bibitem[Sz{\'e}kely et~al., 2007]{Szekely_2007}
Sz{\'e}kely, G.~J., Rizzo, M.~L., and Bakirov, N.~K. (2007).
\newblock Measuring and testing dependence by correlation of distances.
\newblock {\em The Annals of Statistics}, 35(6):2769--2794.

\bibitem[Verma and Pearl, 1990]{verma1990equivalence}
Verma, T. and Pearl, J. (1990).
\newblock Equivalence and synthesis of causal models.
\newblock In {\em Proceedings of the Sixth Annual Conference on Uncertainty in
  Artificial Intelligence}, pages 255--270. Elsevier Science Inc.

\end{thebibliography}
\end{document}